\algnewcommand{\algorithmicforeach}{\textbf{for each}}
\newtheorem{theorem}{\bf Theorem}[section]
\newenvironment{proof}{\quad{\noindent\it Proof.}\quad}{\hfill $\square$\par}
\newcommand{\algmargin}{\the\ALG@thistlm}
\newlength{\whilewidth}
\algnewcommand{\parState}[1]{\State%
	\parbox[t]{\dimexpr\linewidth-\algmargin}{\strut #1\strut}}
\def\BibTeX{{\rm B\kern-.05em{\sc i\kern-.025em b}\kern-.08em
    T\kern-.1667em\lower.7ex\hbox{E}\kern-.125emX}}
\title{\LARGE \bf
	Efficient reinforcement learning control for continuum robots\\ based on Inexplicit Prior Knowledge\\
	\thanks{}
}
\author{\IEEEauthorblockN{Junjia Liu$ ^\dagger $$^1$, Jiaying Shou$ ^\dagger $$^1$\thanks{$ ^\dagger $The first two authors Junjia Liu and Jiaying Shou contributed equally to this paper. (Corresponding author: Zhuang Fu, e-mail: zhfu@sjtu.edu.cn)},
		Zhuang Fu$^1 $, Hangfei Zhou$^1$, Rongli Xie$^2$, Jun Zhang$^2$ , Jian Fei$^2$ and Yanna Zhao$^3$}
	\IEEEauthorblockA{$1$ \textit{State Key Laboratory of Mechanical System and Vibration}, \textit{Shanghai Jiao Tong University}\\
		$2$ \textit{Ruijin Hospital affiliated to Shanghai Jiao Tong University School of Medicine}\\
		$3$ \textit{ Shanghai Ruijin Rehabilitation Hospital}\\
		Shanghai, 200240, China \\
		junjialiu@sjtu.edu.cn, Tabris@sjtu.edu.cn
	}
}
\begin{document}

\maketitle
\thispagestyle{empty}
\pagestyle{empty}

\begin{abstract}
	Compared to rigid robots that are generally studied in reinforcement learning, the physical characteristics of some sophisticated robots such as soft or continuum robots are higher complicated. Moreover, recent reinforcement learning methods are data-inefficient and can not be directly deployed to the robot without simulation. In this paper, we propose an efficient reinforcement learning method based on inexplicit prior knowledge in response to such problems. We first corroborate the method by simulation and employed directly in the real world. By using our method, we can achieve active visual tracking and distance maintenance of a tendon-driven robot which will be critical in minimally invasive procedures. Codes are available at \textit{https://github.com/Skylark0924/TendonTrack}.
	
\end{abstract}

\begin{IEEEkeywords}
 Inexplicit prior knowledge, model-based, continuum robot
\end{IEEEkeywords}

\section{Introduction}
For decades, researchers have made massive efforts to make machines intelligent, in expectation of relieving human labors from repetitive, dangerous, and heavy work.
In traditional robotics, control of robots is realized by establishing kinematic and dynamic models in the form of a transformation matrix. This method has achieved excellent results in conventional robots with discrete rigid links but becomes difficult to implement when dealing with soft robots such as continuum robots. In the traditional method, several subjective assumptions have to be made to get control of continuum manipulators, leading to a deviation with actual circumstances and inaccurate in results\cite{Jones&Walker2006Kine}. Even though, the kinematic and dynamic models for continuum robots are often described in the form of nonlinear partial differential equations, which makes the control more complex.

Ever since reinforcement learning (RL) theory was proposed, developers have been trying to apply it to robotics. With introducing RL methods, it enhances the traditional method in rigid robotics with trial-and-error\cite{Qt-opt}\cite{Sim-to-real}. But the application of RL theory in continuum robots could still meet some resistance.
As far as we were concerned, recently only a few studies have applied RL to control continuum robots. In Thuruthe et al.’s research\cite{thuruthel2018model}, an accurate Vicon tracking system is provided for realizing closed-loop control from the third-person perspective. However, devices used in their research are not available for most application scenarios of continuum robots. Furthermore, data-inefficiency is the major drawback of RL algorithms, especially in a non-stationary continuum robot, which can make the learning on the real-world robot more impractical.

In this paper, we focalize automatic kinematics learning of complex robotic systems and end-to-end predicting control by using a visual servo from a first-person perspective. The primary problem we tackled is the data-efficiency of complex and non-stationary real-world robotics. We use the inexplicit prior knowledge to speed up the convergence of the learning process. Meanwhile, the ability of exploration is still guaranteed by an auto-adjusted exploitation coefficient.

To evaluate our proposed method empirically, we build a simulator by \textit{MuJoCo}\cite{mujoco} first and then try on a real-world continuum robot directly.
Our primary contributions are as follows:
\begin{itemize}
\item An efficient model-based RL framework for robotics that integrates inexplicit prior knowledge (IPK) is proposed. It guides the exploration followed the constrain of priors;
\item A Kalman filter based fusion controller fuses action distributions from priors and RL to achieve a safe exploration;
\item To balance the performance of priors and RL, we set an exploitation coefficient $\zeta$ that can adjust automatically according to the Kullback-Leibler (KL) divergence between two action distributions;
\item Results of simulation and experiment on real-world continuum robot demonstrate the data-efficiency of our method and require fewer interactions than the state-of-the-art model-based methods.  
\end{itemize}

\section{Related Works}
\subsection{Model-based Reinforcement Learning}
The word \textit{model-based} is easily ambiguous, which can both represent a given model in MPC and a learned model mainly used in RL. In this paper, model-based means a model learned from the trajectory data when either the system dynamic model or the environment model is unknown. 

Model-based reinforcement learning (MBRL) began with Dyna \cite{Dyna} architecture. Compared to model-free reinforcement learning (MFRL), it is undoubtedly more suitable for robotic systems because of the data-efficiency of taking full advantage of experience data. Since MBRL uses a learned dynamic model to promote the learning process, its uncertainty will bring incorrect transition and impair value function approximation\cite{kalweit2017uncertainty}. MVE\cite{MVE} controls the uncertainty of the model by limiting the imagination of the model to a fixed depth. STEVE\cite{STEVE} improves the thought of MVE by dynamically interpolating between model rollouts of a different horizon length of each example and ensures that models are used without redundant errors.

Furthermore, probabilistic models that usually relay on Bayesian methods are more suitable for robotics issue since they can combine the uncertainty into model building\cite{chatzilygeroudis2019survey}. Black-DROPS\cite{chatzilygeroudis2017black} and PILCO\cite{PILCO} both utilize Gaussian Processes (GPs) to reduce the interaction time and solve several robotics tasks. On the basis of GPs, Bayesian
NN (BNN)\cite{gal2016dropout} has been used in some work to improve the scaling of MBRL algorithms\cite{gal2016improving}. Chua et al. \cite{chua2018deep} propose a combination of ensembles and BNN recently for learning probabilistic dynamics models of higher dimensional systems.

Most recently, Michael et al.\cite{janner2019trust} propose a monotonic model-based policy optimization (MBPO) algorithm. MBPO combines the benefits of adaptive length planning and ensemble BNN models to provide a performance guarantee and get a state-of-the-art efficient performance on several common RL tasks.

\subsection{Reinforcement Learning with prior knowledge}
Although MBRL algorithms achieve infusive success, they still take too many time steps (e.g., the state-of-the-art MBRL method MBPO still needs 5k steps even for a simple Pendulum task) which still impractical in real-world robot application. 
Recently, a comprehensive survey on policy search algorithms for learning robot controllers in a handful of trials is worth reading \cite{chatzilygeroudis2019survey}. Except for creating data-driven surrogate models as MBRL algorithms do, the article states that there is another way to let robots adapt with \textit{micro-data}: leverage prior knowledge on the policy parameters\cite{billard2008survey}\cite{osa2018algorithmic}, on the expected return\cite{cully2015robots}, or on the dynamic models\cite{chatzilygeroudis2018using}\cite{cutler2015efficient}. We can bring some prior knowledge of the robot system in for both stable and efficient, rather than merely learning from scratch.

Similarly, as flourishing fields, Imitation Learning (IL) and Reinforcement Learning from Demonstrations (RLfD) also use expert demonstrations as a prior for accelerating the training process. They integrate expert data by behaviour cloning\cite{schaal1997learning}\cite{abbeel2004apprenticeship}\cite{jing2019task}, data augmenting\cite{DAgger}, or setting as a policy penalty\cite{brys2015reinforcement}\cite{kang2018policy} or a constraint\cite{mareinforcement}.

In addition, priors can use in a stronger way for some tasks. Moreno et al.\cite{moreno2004prior} add a set of prior knowledge sources as a basic controller and use a credit assignment block to judge when to explore by RL. However, the evaluation function is designed by hand and just acts as a conditional judgment.

\subsection{Continuum Robot Control}
Continuum robots have many usages in flexible scenes, especially in interventional medicine field, because they theoretically have infinite Degrees of Freedom (DOF)\cite{burgner2015continuum}.

Studies on control of continuum robots have been widely explored in traditional methods\cite{Survey2018}. Researchers tend to establish the manipulator kinematic and dynamic models derived from several geometric assumptions. The most commonly used model simplifies the control issues based on the constant curvature (CC) approximation and linearized feedback\cite{Jones&Walker2006Kine}\cite{hannan2003kinematics}. This CC model performs worse when external loads are non-negligible\cite{Gravagne2003deflection}\cite{trivedi2007geometrically}. As an alternative, mechanics-modified models were used in continuum robotics. Walker, Hannan and Gravagne have introduced the hyper-redundant robotics\cite{Gravagne2000}\cite{hannan2000novel} and large-deflection dynamic model was used in their researches\cite{Gravagne2003deflection}. Considering the backbone of continuum robots as an elastic rod, Webster et al.\cite{rucker2011statics} and Mahvash et al.\cite{mahvash2011stiffness} have respectively applied Cosserat rod model in their researches. Although an increase in accuracy is found, solutions of those models, described in the form of nonlinear partial differential equations, are sensitive to parameters and time-consuming\cite{trivedi2007geometrically}\cite{chikhaoui2019compare_model}, which inevitably increases the complexity of the control issues in continuum robotics.

\section{Problem Formulation}
Like other RL problems, we need first simplify the sequential decision procedure as a Markov Decision Process (MDP). Then abstract it into $<\boldsymbol o,\boldsymbol a,\boldsymbol o',r>$:
\begin{itemize}
	\item \textbf{Observation} $\boldsymbol o$: Observation contains two parts: \textbf{states of the target} and \textbf{accumulated actions}. States of target are obtained from the view scope of the pinhole camera at the end of the continuum. They consist of the horizontal and vertical coordinate components $w, l$  and the Euclidean distance $h$ between the camera and the target, as shown in Fig.\ref{scope}. Accumulated actions are obtained by adding executed actions throughout the trajectory;
	\item \textbf{Action} $\boldsymbol a$: Actions are the control values of four motors, each is between (0, 1);
	\item \textbf{Reward} $r$: The raw reward is the amount of observation change between current observation and next observation. There are also some special scenes need to amend the raw reward which are listed below:
\end{itemize}
$$ R=\left\{
\begin{array}{rcl}
r-10       &      & {\text{any}(s') \text{ is None}}\\
r-10       &      & \text{out of view scope}\\
r+10       &      & {|s_0'| < \epsilon\text{ and }|s_1'|<\epsilon}\\
r+100      &      & {\text{finish the tracking route}}
\end{array} \right. $$
\begin{figure}[h]
	\centerline{\includegraphics[width=0.6\linewidth]{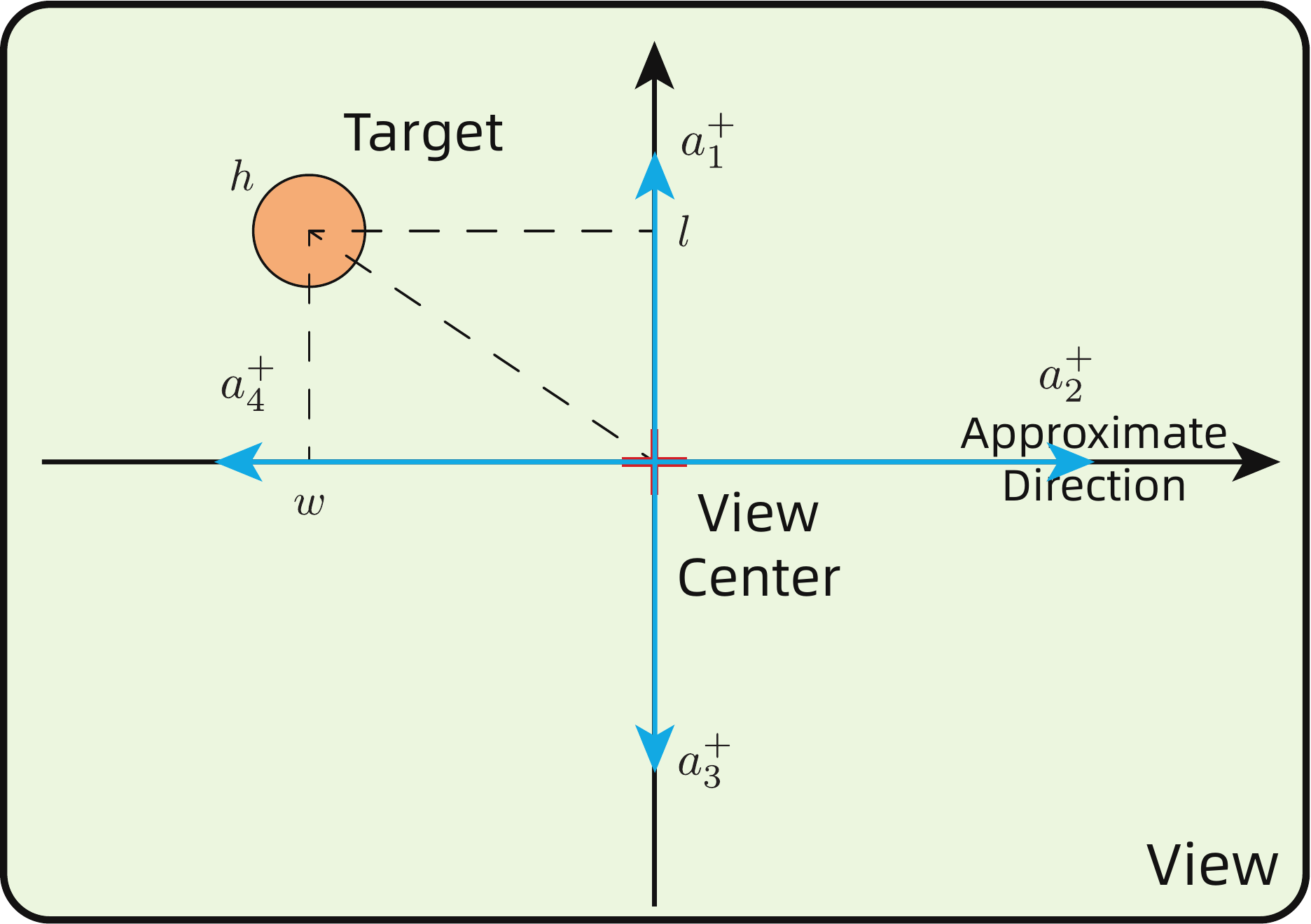}}
	\caption{View scope from the pinhole camera at the end of the continuum robot. A coordinate system is established with approximate directions of four motors of the continuum robot as the coordinate axis (see the details in IV.A). $a_n^+$ represents the positive direction of motor $n$ in our task. Distance $h$ is obtained by a simple digital image process technique.}
	\label{scope}
\end{figure}

As we want to introduce both model-based and prior knowledge into robotics RL problems, the strategy of building surrogate models and leveraging prior knowledge (Section IV.A) need to be selected carefully. In this article, we build a probabilistic surrogate model for a dynamic system. We model the robots as discrete-time dynamical systems that can be described by transition probabilistic of the form:
$$\boldsymbol o_{t+1} \sim p(\boldsymbol{o}_{t+1}|\boldsymbol o_t,\boldsymbol a_t)$$

Use the trajectories observed so far to learn ensemble BNN models. Given $n$ observation-action pairs $\tilde{\mathbf{X}}=\left[
\tilde{\mathbf{x}}_{1}, \ldots, \tilde{\mathbf{x}}_{n}
\right],  \tilde{\mathbf{x}}_t = (\boldsymbol o_t,\boldsymbol a_t)$ as training inputs and corresponding training targets $\mathbf{y}=\left[\Delta_{1}, \ldots, \Delta_{n}\right], \Delta_{t} = \boldsymbol o_{t} - \boldsymbol o_{t-1}$. The reason why we use differences as model targets is that differences are easier to learn than original observation. Learning differences is similar as learning gradients. Therefore, we can use these models to estimate the probabilistic mentioned above and make a planning\cite{depeweg2016learning}.

\section{RL based on Inexplicit Prior Knowledge}
Before elaborating on our method, imagine a simple daily scene. When you are asking for directions in an unknown area, you can just get an answer like “go straight and turn left at the third corner”, rather than an explicit map. In the beginning, you might be confused about the whole trajectory, but when you go straight, you will definitely know when to turn left!

Humans always have some intuitive inexplicit prior knowledge (IPK) about how to control robots, like “it (a rigid manipulator) should raise the third joint to catch the ball” or “it (a mobile robot) should go forward and turn right to enter the room”. It might be inaccurate, but is generally on the right path. To avoid useless exploration in a complex manipulator system, the general trend of movement can be pointed out and taught to the ignorant robot. All it needs to do is continuing amending the movement trend mapping from data and finally get reliable and explicit mapping. According to this idea, the main framework of our method is shown in Fig. \ref{main}. The full algorithm is outlined in Appendix C Algorithm \ref{Pseudocode}.
\begin{figure}[h]
	\centerline{\includegraphics[width=1\linewidth]{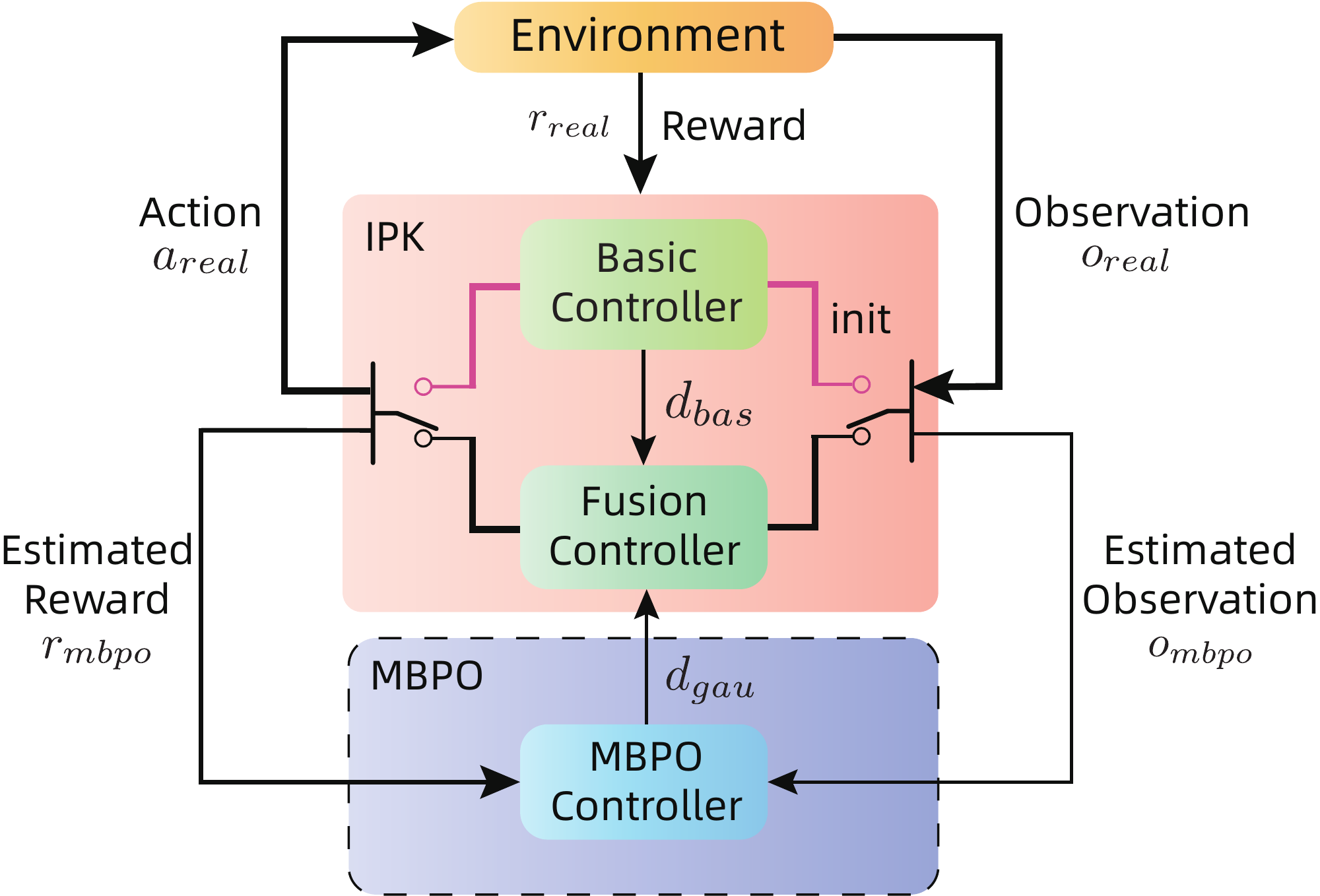}}
	\caption{The main framework of our method contains two parts: IPK subsystem and MBPO  subsystem. Light pink IPK subsystem, which includes both Basic Controller for initial exploration and Fusion Controller for stable long-term control, is the real agent policy to interact with the environment. The lilac MBPO subsystem obtains estimated rewards and observations from IPK, outputs the action distribution of its policy to fuse with basic action distribution, and finally constitutes a fusion policy.}
	\label{main}
\end{figure}

\subsection{Exploration guided by the prior knowledge}
\begin{figure}[h]
	\centerline{\includegraphics[width=0.8\linewidth]{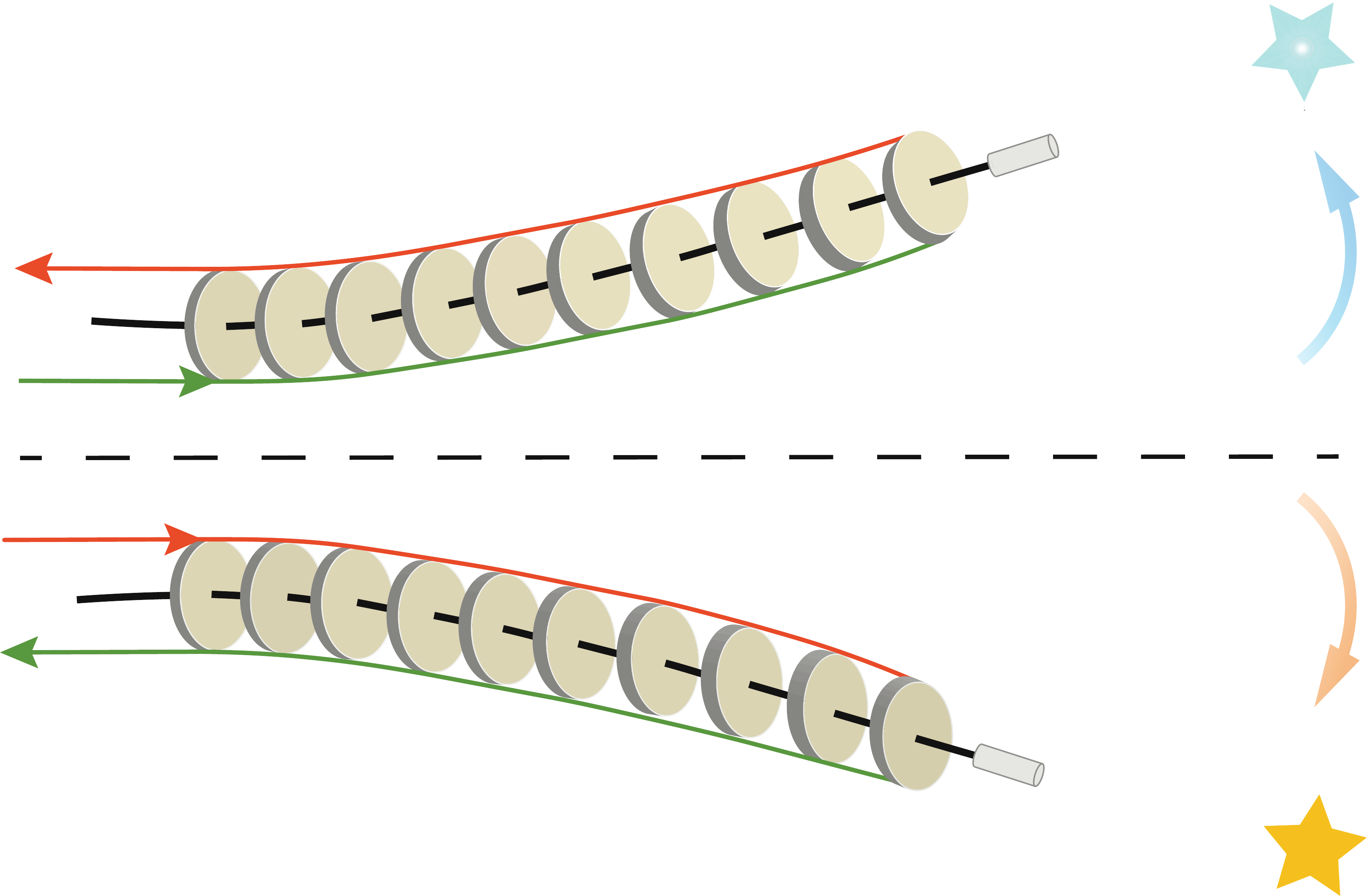}}
	\caption{Take the two-DOF tendon driven continuum robot as an example, the blue star in the top figure is above the central axis, thus the red tendon needs to be wound up, and the green one needs to be released. The bottom figure vice versa. At the end of the black elastic rod, there is a pinhole camera to provide a first-person perspective.}
	\label{continuum}
\end{figure}
\begin{figure*}[htbp]
	\centerline{\includegraphics[width=0.6\linewidth]{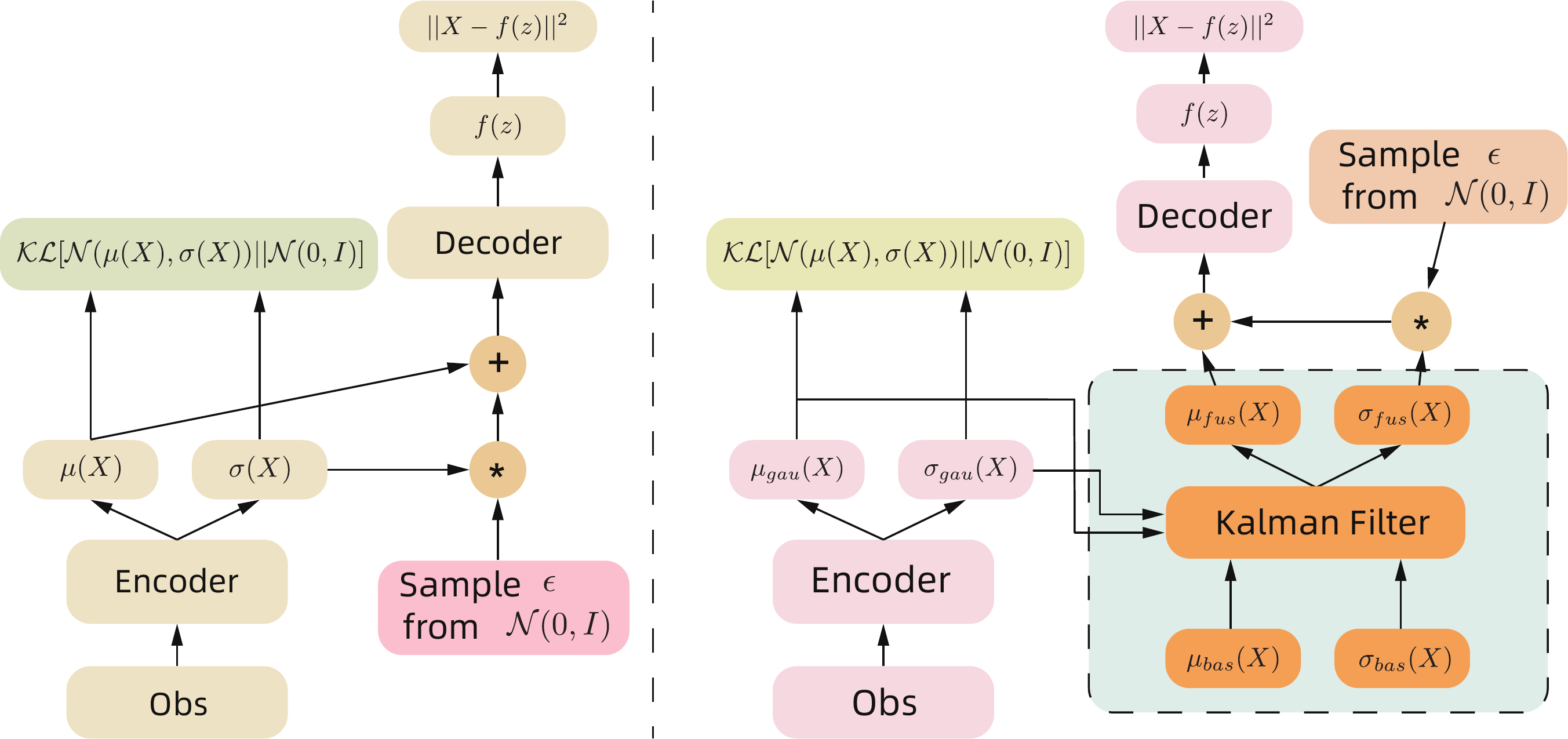}}
	\caption{Left: Reparameter trick in SAC paper; Right: Before the reparameter trick, the output Gaussian distribution from the MBPO controller is fusing with distribution from the basic controller by Kalman filter.}
	\label{Fusion_controller}
\end{figure*}

The so-called \textit{inexplicit} in this article represents the \textit{approximate} direction of each tendon motor. As shown in Fig. \ref{continuum}, the motor at the end of the red line determines the rise of the robot, and the green one determines the downwards direction. Thus, we can build a coordinate axis in Fig. \ref{scope} using these approximate directions. This kind of information is much easier to obtain than a kinematic model and certainly not precise. They can be obtained by powering up each motor and recording their specialty. Different from the expert demonstrations in IL and RLfD fields, the condition of inexplicit prior knowledge is stronger. This prior knowledge can be regarded as a \textbf{basic rule-based controller}. It provides the simplest way to control a robot. Before each interaction, the target direction is first confirmed from the first-person perspective camera. Then calculate horizontal and vertical coordinate components $(w, l)$. Finally, randomly select a motor in that direction to perform the motion $(a_4^+ \text{ \textit{or} } a_2^-, a_1^+  \text{ \textit{or} } a_3^-)$.

Because of suffering from the inaccuracy and not wasting time on tuning a PID function, this basic controller has many drawbacks:
\begin{itemize}
	\item When the target is close to the center of view, it is easy to oscillate around the target and never converge;
	\item As the degree of bending becomes greater, the approximate is much more inaccuracy, especially when bending from two vertical directions acts  simultaneously;
	\item Since motors in the same direction are selected randomly in the basic controller, it is powerless for achieving the distance keeping task.
\end{itemize}

Therefore, we also need RL exploration to amend the observation-action mapping. The integration of priors intends to prolong the length of the RL task horizon and tries to make it possible to sample more successional observation-action pairs. In this paper, we adopt the soft actor-critic (SAC) \cite{SAC} as our policy gradient algorithm and MBPO as our MBRL algorithm. The primary procedure of MBPO is to employ a uniform policy that generates random actions to guarantee the exploration scope. However, this will lead to a major risk of a robot crash and may cost a tremendous amount of time to reset. Both of them are insufferable in a real-world application. 

We tackle this by setting two sets of interaction outputs, one from the IPK basic controller and another from MBPO. As shown in Fig.\ref{main}, the replay buffer is augmented from $\langle o, a, r, o'\rangle$ to $\langle o_{real}, a_{mbpo}, r_{mbpo}, o_{mbpo}', a_{real}, r_{real}, o_{real}'\rangle$, where the subscript $real$ stands for information from IPK subsystem which does the real interaction. Both $real$ and $mbpo$ contain two parts, $real$ contains \textbf{initial exploration procedure} $bas$ and \textbf{fusion procedure} $fus$, e.g., $o_{real}=\{o_{bas}, o_{fus}\}$. Correspondingly,  the subscript $mbpo$ contains \textbf{uniform initial exploration procedure} $uni$ (which is not shown in Fig.\ref{main}) and \textbf{Gaussian policy learning procedure} $gau$, e.g., $o_{mbpo}=\{o_{uni}, o_{gau}\}$. Actions from IPK subsystem is used for practically interacting with the environment and get the real reward $r_{real}$ and the real next observation $o_{real}'$. In contrast, MBPO information is merely used in policy updates.

We first describe the exploration approach of the initial exploration procedure in this chapter and elaborate on the fusion and learning procedure later.
According to $\langle o, a_{bas}, r_{bas}, o_{bas}'\rangle$, the approximation of reward $r_{uni}$ and next observation $o_{uni}'$ can be estimated by local linearity:

\begin{equation}\label{r_uni}
\begin{aligned}
r_{uni} &= \sum^n_{i\in \mathcal{A}} (a_{uni}^i-a_{bas}^i)+ r_{real}\\
o'_{uni} &= \dfrac{o'_{bas}-o}{a_{bas}}\times a_{uni}
\end{aligned}
\end{equation}

Intuitively, IPK actions guarantee that robots will eventually reach their target with a high probability, and the MBPO part can still improve its policy with a certain degree of precision. Therefore, the initial exploration procedure achieves a safe exploration. It implements once and gain twice experience, it is obviously more efficient than the vanilla MBPO does (more details in IV.C Data Augment).

\subsection{Fusion Controller}
After the initial exploration procedure, MBPO trains a Gaussian policy as the main learning policy which outputs Gaussian distributions of actions. Correspondingly, the IPK subsystem also turns into a new link: Fusion Controller. 

Although MBPO disentangles the task horizon and model horizon by querying the model only for short rollouts, it is still limited by the probability of reaching the target, especially in sparse reward problem. Since the IPK basic controller is rule-based, it is convenient to assess its performance. From the initial replay buffer and the log of their task length, we can revert the data to the full form. At each time step, we can get the vector of target both before and after the action, then the deviation of each action from anticipative direction can be easily estimated. These deviations can be depicted as a Gaussian distribution (more details in IV.C Basic Controller Accuracy Estimation). Moreover, the raw actions of SAC are also Gaussian distribution. How can we use both of these useful information?

A very naive thought is fusing the basic output Gaussian with the SAC action distribution. Kalman filter is a common method to fuse the measurement information of multiple sensors and tend to be more accurate than each of them. As Fig. \ref{Fusion_controller} shown, we use a Kalman filter to integrate outputs from both two controllers and acquire a new fusion distribution. This procedure is before the reparameter trick of SAC.
\begin{equation}\label{kalman}
\begin{aligned}
\mu_{fus} &= \dfrac{\sigma^2_{bas}\times \mu_{gau} +\sigma^2_{gau}\times \mu_{bas}}{\sigma^2_{bas}+ \sigma^2_{gau}}\\
\sigma^2_{fus}&= \dfrac{1}{\dfrac{1}{\sigma^2_{bas}}+\dfrac{1}{\sigma^2_{gau}}}
\end{aligned}
\end{equation}

Our motivation for introducing the IPK subsystem is to demonstrate and guide the MBRL algorithm in order to reduce wasting time on useless exploration at the beginning, but not limit it. Because some motions, like axial distance maintenance and real-time tracking, cannot gain enough information from IPK basic controller, they still need relay on the exploration. So the MBPO reward estimation here is more complicated. We set an exploitation coefficient $\zeta$ to balance exploration and exploitation which is inspired by the temperature coefficient $\alpha$ in MBPO.
\\
\begin{theorem}[Exploitation Coefficient Auto-Adjustment]\label{ecaa}
	Let $\mathcal{G}_{gau}$ be the Gaussian distribution from the $T-1$ Gaussian policy and let $\mathcal{G}_{bas}$ be the basic action distribution with uncertainty. Then the exploitation coefficient is related to the KL-divergence between these two distributions.
\begin{equation}
\begin{aligned}
	\zeta_{bas}^{T^*}&=\arg\min_{\zeta^{T}\geq  0} \mathbb{E}_{s_{T-1},a_{T-1}\sim\rho_\pi^*}\\
	&\{-\zeta^{T}_{bas}\mathcal{D}_{KL}[\mathcal{G}_{bas}(\pi^{T-1}_{bas})||\mathcal{G}_{gau}(\pi^{T-1}_{gau})]-\zeta^{T}_{bas}\mathcal{D}_0\}\\
	\zeta_{real}^{T^*}&=1-\zeta_{bas}^{T^*}
\end{aligned}
\end{equation}
where $\mathcal{D}_0$ is a target divergence for KL-divergence limiting.\\
\end{theorem}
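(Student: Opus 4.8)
The plan is to mirror the constrained-optimization derivation used for automatic temperature tuning in SAC, with the KL-divergence $\mathcal{D}_{KL}[\mathcal{G}_{bas}\|\mathcal{G}_{gau}]$ taking the role that the policy entropy plays there. First I would recast the exploration--exploitation balance as a constrained maximization of the expected cumulative reward over the policy sequence $\pi_0,\dots,\pi_T$, subject to a per-timestep lower bound $\mathcal{D}_0$ on the divergence between the basic and Gaussian action distributions; intuitively this forbids the Gaussian MBPO policy from collapsing onto the inexplicit prior and thereby preserves exploration. The exploitation coefficient $\zeta_{bas}$ then appears naturally as the Lagrange multiplier attached to this constraint.

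Next I would solve the constrained problem by backward dynamic programming over the finite horizon. Writing the objective recursively lets me isolate the optimization one step at a time, so that only the terminal step $T$ need be treated explicitly, while the earlier steps follow from the same argument applied to the recursively defined value function. At step $T$ I would form the Lagrangian with dual variable $\zeta_{bas}^{T}\ge 0$, invoke strong duality to swap the inner maximization over $\pi_T$ with the outer minimization over $\zeta_{bas}^{T}$, and solve the inner problem for the optimal policy $\pi_T^{*}$ as a function of $\zeta$. Substituting $\pi_T^{*}$ back collapses the reward term into a $\zeta$-independent constant and leaves exactly the claimed $\arg\min$ over $\zeta_{bas}^{T}\ge 0$ of the expectation of $-\zeta_{bas}^{T}\mathcal{D}_{KL}[\mathcal{G}_{bas}\|\mathcal{G}_{gau}]-\zeta_{bas}^{T}\mathcal{D}_0$. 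The complementary identity $\zeta_{real}^{T^*}=1-\zeta_{bas}^{T^*}$ I would read off from the convex-combination normalization already imposed by the Kalman fusion in \eqref{kalman}, where the two controller weights are forced to sum to one.

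The step I expect to be the main obstacle is justifying the exchange of $\max$ and $\min$, i.e. establishing strong duality with no duality gap. This hinges on the objective being linear --- hence concave --- in the policy distribution while the divergence constraint carves out a convex feasible set, the convexity following from that of the KL-divergence in its first argument, together with a Slater-type interior-point condition. A secondary technical point is keeping the backward recursion well posed: after substituting the optimal multiplier at later steps, the per-step value function must stay concave in the policy so that the same duality argument applies at every step rather than only at the terminal one.
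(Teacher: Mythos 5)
Your proposal takes essentially the same route as the paper's own proof: the paper likewise casts the problem as maximizing the maximum-entropy return subject to the constraint $\mathbb{E}_{(s_t,a_t)\sim\rho_\pi}[-\mathcal{D}_{KL}(\pi_{bas}||\pi_{gau})]\leq\mathcal{D}_0$, introduces $\zeta_T$ as the Lagrange multiplier for that constraint (alongside SAC's temperature $\alpha_T$), and obtains the stated $\arg\min$ by the same backward recursion used in SAC's automatic temperature tuning. The minor differences are that the paper carries the entropy multiplier explicitly in the Lagrangian while you fold it into the analogy, and that the paper simply asserts both strong duality and the normalization $\zeta_{real}^{T^*}=1-\zeta_{bas}^{T^*}$, so your attention to the duality gap and to reading the latter off the Kalman fusion weights is, if anything, more careful than the original.
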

\begin{proof}
See Appendix A.1.

\end{proof}

By introducing KL-divergence into the reward function, the behaviour of the MBPO controller will similar to the basic controller gradually. Moreover, we set a negative target divergence $\mathcal{D}_0$ to overcome the uncertainty of basic controller and trade-off exploration and exploitation. This approach is analogous to the behaviour cloning method in IL field\cite{DAgger}.
\begin{equation}
\begin{aligned}
	r_{gau}= \zeta_{bas}[- \mathcal{D}_{KL}(\mathcal{G}_{bas}||\mathcal{G}_{gau})- \mathcal{D}_0+r_{real}]  + \zeta_{real}r_{real}
\end{aligned}
\end{equation}

Meanwhile, the exploitation coefficient also should be used in Equation \ref{kalman} as a weight parameter and structure the final fusion function:
\begin{equation}\label{weighted_kalman}
\begin{aligned}
\mu_{fus} &= \dfrac{\zeta_{real}\times \sigma^2_{bas}\times \mu_{gau} +\zeta_{bas}\times\sigma^2_{gau}\times \mu_{bas}}{ \zeta_{real} \times\sigma^2_{bas}+ \zeta_{bas} \times \sigma^2_{gau}}\\
\sigma^2_{fus}&= \dfrac{1}{\zeta_{bas}\times\dfrac{1}{\sigma^2_{bas}}+\zeta_{real}\times\dfrac{1}{\sigma^2_{gau}}}
\end{aligned}
\end{equation}

The policy evaluation step is similar to Soft Policy Evaluation\cite{SAC}, it ensures that we can obtain soft value function for any policy $\pi$. However, we need to prove that the new policy will achieve higher value than the old one by limiting the KL-divergence.
\\
\begin{theorem}[Fusional Policy Improvement]
	According to Theorem \ref{ecaa} and Equation \ref{weighted_kalman}, let $\pi_{T}\triangleq \pi(a_t|s_t,\zeta_{bas}^{T-1})$, the new policy of $T+1$ time step is $\pi_{T+1}\triangleq \pi(a_{T+1}|s_{T+1},\zeta_{bas}^{T})$. Then $Q^{\pi_{T+1}}\left(\mathbf{s}_{t}, \mathbf{a}_{t}\right) \geq Q^{\pi_{T}}\left(\mathbf{s}_{t}, \mathbf{a}_{t}\right)$ for all $\left(\mathbf{s}_{t}, \mathbf{a}_{t}\right) \in \mathcal{S} \times \mathcal{A}$ with $|\mathcal{A}|<\infty$.\\
\end{theorem}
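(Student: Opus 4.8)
The plan is to follow the Soft Policy Improvement argument of the SAC framework, adapting the entropy term into the $\zeta$-weighted KL-divergence penalty that Theorem \ref{ecaa} injects into the reward. First I would record the soft value function implied by the regularized reward $r_{gau}$, writing $V^{\pi_T}(s_t)=\mathbb{E}_{a_t\sim\pi_T}[Q^{\pi_T}(s_t,a_t)-\zeta_{bas}^{T-1}\mathcal{D}_{KL}(\mathcal{G}_{bas}\|\mathcal{G}_{gau})]$ together with the Bellman recursion $Q^{\pi_T}(s_t,a_t)=r(s_t,a_t)+\gamma\,\mathbb{E}_{s_{t+1}\sim p}[V^{\pi_T}(s_{t+1})]$. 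This casts the claim into the familiar form $Q^{\pi_{T+1}}\geq Q^{\pi_T}$, where the only nonstandard feature is that the penalty weight is updated from $\zeta_{bas}^{T-1}$ to $\zeta_{bas}^{T}$ between the two policies.

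The crux is a single-step inequality, namely $V^{\pi_T}(s_t)\leq\mathbb{E}_{a_t\sim\pi_{T+1}}[Q^{\pi_T}(s_t,a_t)-\zeta_{bas}^{T}\mathcal{D}_{KL}(\mathcal{G}_{bas}\|\mathcal{G}_{gau})]$. To obtain it I would exploit the defining property of the fusion update: the new policy $\pi_{T+1}$ arises from the Kalman fusion \eqref{weighted_kalman} with the coefficient $\zeta_{bas}^{T}$ selected by the minimization in Theorem \ref{ecaa}. Since $\zeta_{bas}^{T}$ minimizes the $\zeta$-weighted KL objective over $\zeta\geq 0$, evaluating that objective at the previous weight and distribution furnishes an upper bound on its value at the minimizer, and rearranging this optimality condition yields precisely the required lower bound on the expected old $Q$-value under the new policy. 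The finiteness assumption $|\mathcal{A}|<\infty$ guarantees that the KL penalty and the expectations stay bounded, so every quantity in this manipulation is well-defined.

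With the single-step inequality in hand, I would close the argument by the standard bootstrapping expansion. Starting from $Q^{\pi_T}(s_t,a_t)=r(s_t,a_t)+\gamma\,\mathbb{E}[V^{\pi_T}(s_{t+1})]$, I substitute the single-step inequality to dominate $V^{\pi_T}(s_{t+1})$ by the expected old $Q$-value under $\pi_{T+1}$, then re-expand that $Q^{\pi_T}$ through its own Bellman equation, and iterate. Because $0\leq\gamma<1$ and all per-step terms are bounded, the resulting series converges and the repeated substitution collapses to $Q^{\pi_{T+1}}(s_t,a_t)$, giving $Q^{\pi_T}(s_t,a_t)\leq Q^{\pi_{T+1}}(s_t,a_t)$ for all $(s_t,a_t)$.

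The main obstacle I anticipate is the coupling between the coefficient optimization and the policy update: unlike vanilla SAC, here the penalty weight $\zeta_{bas}$ changes across iterations and also reweights the fusion \eqref{weighted_kalman} that defines $\pi_{T+1}$ itself. I therefore need to verify carefully that the minimizer $\zeta_{bas}^{T}$ from Theorem \ref{ecaa} produces a fused policy whose expected old $Q$-value genuinely dominates $V^{\pi_T}$, rather than merely reducing the KL objective; ensuring that the target divergence $\mathcal{D}_0$ and the $\zeta$-dependence of $\sigma^2_{fus}$ do not break this monotonicity is the delicate part of the estimate.
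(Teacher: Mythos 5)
Your overall route is the same as the paper's: both arguments mimic SAC's Soft Policy Improvement, expanding the soft Bellman recursion for $Q^{\pi_T}$ with the $\zeta$-weighted KL penalty in place of the entropy term, inserting a single-step inequality that dominates $V^{\pi_T}$ by an expectation under $\pi_{T+1}$, and iterating the expansion (using $\gamma<1$ and boundedness from $|\mathcal{A}|<\infty$) until it collapses to $Q^{\pi_{T+1}}$. The paper's version is just a compressed two-line display of this scheme, followed by the remark that the KL divergence, and hence $\zeta$, decreases across iterations.

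The genuine gap sits exactly at the point you flag in your last paragraph but do not resolve: the single-step inequality $V^{\pi_T}(s)\leq\mathbb{E}_{a\sim\pi_{T+1}}\left[Q^{\pi_T}(s,a)-\zeta_{bas}^{T}\mathcal{D}_{KL}(\mathcal{G}_{bas}\|\mathcal{G}_{gau})\right]$ cannot be obtained by ``rearranging'' the optimality condition of Theorem \ref{ecaa}. That minimization is over the scalar $\zeta^{T}\geq 0$ alone, and its objective $-\zeta^{T}\mathcal{D}_{KL}-\zeta^{T}\mathcal{D}_0$ is \emph{linear} in $\zeta^{T}$, so its minimizer (which sits at $0$ or at the boundary depending on the sign of $\mathcal{D}_{KL}+\mathcal{D}_0$) carries no information about how the expected old-$Q$ value changes when the \emph{policy} changes from $\pi_T$ to the fused $\pi_{T+1}$. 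In SAC the analogous step goes through because the new policy is defined as the minimizer, over a family of policies with the old policy feasible, of the KL divergence to the exponentiated $Q$-function; evaluating that variational objective at the old policy is what yields the domination. Here $\pi_{T+1}$ is produced by the Kalman fusion of Equation \ref{weighted_kalman}, which is not characterized as the solution of any such policy-level variational problem, so an additional lemma --- that the fused distribution improves the penalized objective $\mathbb{E}_{a\sim\pi}[Q^{\pi_T}-\zeta\mathcal{D}_{KL}]$ relative to $\pi_T$ --- is required and is missing; your worry about $\mathcal{D}_0$ and the $\zeta$-dependence of $\sigma^2_{fus}$ is well placed, since nothing in the fusion formula forces this monotonicity. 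To be fair, the paper's own proof has the same hole: it asserts the first inequality in its Bellman chain purely from the heuristic that the KL divergence ``is getting lower, leading to a smaller $\zeta$.'' So your attempt faithfully reproduces the paper's argument, including its unproved step, but judged as a proof it is incomplete at the same place.
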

\begin{proof}
	See Appendix A.2.

\end{proof}

Implement Soft Policy Evaluation and Fusion Policy Improvement repeatedly, the policy will eventually converge to the optimal as proved in SAC Theorem 1\cite{SAC}.

In conclusion, RL controller is exploring and exploiting beneath the IPK subsystem, interaction with the real-world is IPK subsystem's business. This design brings two advantages:\textbf{ safe interaction} and \textbf{exploration guided by human priors}.

\subsection{Implementation Details}
\textbf{KL-divergence between Multivariate continuous Gaussian distributions.} Since the action dimension is related to DOF of the continuum robot, the action distribution is multivariate. In this task, our continuum robot has 4-DOF and we simply merge the two of same direction action distributions by sum up their means and average their variances. This would relax the restriction of motor selection in the same direction and help to explore more unaware trajectories which are not included in IPK. The KL-divergence between the simplified bi-variate Gaussian distributions can be represented by their means and covariance matrices:
\begin{equation}\label{KL}
\begin{aligned}
\mathcal{D}_{KL}&(\mathcal{G}_{bas}||\mathcal{G}_{gau})\\&=\frac{1}{2}\left[ \log \frac{\left|\boldsymbol\Sigma_{2}\right|}{\left|\boldsymbol\Sigma_{1}\right|}-n+\operatorname{tr}\left(\boldsymbol\Sigma_{2}^{-1} \boldsymbol\Sigma_{1}\right)+\bar{\boldsymbol\mu}^{T} \boldsymbol\Sigma_{2}^{-1}\bar{\boldsymbol \mu}\right] 
\end{aligned}
\end{equation}
where $\bar{\boldsymbol\mu}=\boldsymbol\mu_{2}-\boldsymbol\mu_{1}$, the subscript $1$ for IPK basic policy and $2$ for MBPO Gaussian policy, $n$ represents the number of variables and here $n=2$.

\textbf{Data Augment.} The well-designed structure allows us to obtain two experience simultaneously in one interaction. The augmented dataset $\langle o_{real}, a_{mbpo}, r_{mbpo}, o_{mbpo}', a_{real}, r_{real}, o_{real}'\rangle$ can be divided into two-part, $\langle o_{real}, a_{mbpo}, r_{mbpo}, o_{mbpo}'\rangle$ and  $\langle o_{real}, a_{real}, r_{real}, o_{real}'\rangle$, and sample in proportion according to the trust of MBPO controller. The trust is reflected in the exploitation coefficient $\zeta$ we mentioned above. Data from the MBPO subsystem is gained by estimated, therefore it is necessary to assess its data availability by $\zeta$. After weighted sampling, two-part of data is concatenated together and used to train both model-free SAC network and model-based ensemble BNN models. This ensures the policy to balance the real interaction and MBPO exploration beneath the interaction.

\textbf{Basic Controller Accuracy Estimation.} After initial exploration procedure, we can estimate the accuracy of the basic controller by the trajectory data in replay buffer. Since $\boldsymbol{o}_t$ contains the direction of the target at time $t$, and $\boldsymbol{o}_{t+1}'$ has the direction of time $t+1$. We can use these information to estimate the performance of $\boldsymbol{a}_t$ by calculate the angle $\theta$ between two directions (Fig. \ref{scope2}). For example, if $\boldsymbol b_t=(w_t, l_t), \boldsymbol b_{t+1}=(w_{t+1}, l_{t+1}), \boldsymbol{a}_t = (0, a_{t,2}, a_{t,3}, 0)$, then the real action direction of time $t$ is $\boldsymbol d_t = \boldsymbol b_{t+1} - \boldsymbol b_{t}$:
$$\boldsymbol c_t  = \dfrac{-\boldsymbol b_t}{||\boldsymbol b_t||} \times ||\boldsymbol d_t|| \times \cos \theta =  \boldsymbol b_{t} \cdot \dfrac{\boldsymbol d_t\cdot \boldsymbol b_t}{||\boldsymbol b_t||^2}$$
$$\text{deviation } \boldsymbol e_t = \dfrac{\boldsymbol d_t - \boldsymbol c_t}{||\boldsymbol d_t||}$$
$${\text{deviation per unit action} \dfrac{e_{t,x}}{a_{t,2}} \dfrac{e_{t,y}}{a_{t,3}}}$$
\begin{figure}[h]
	\centerline{\includegraphics[width=0.8\linewidth]{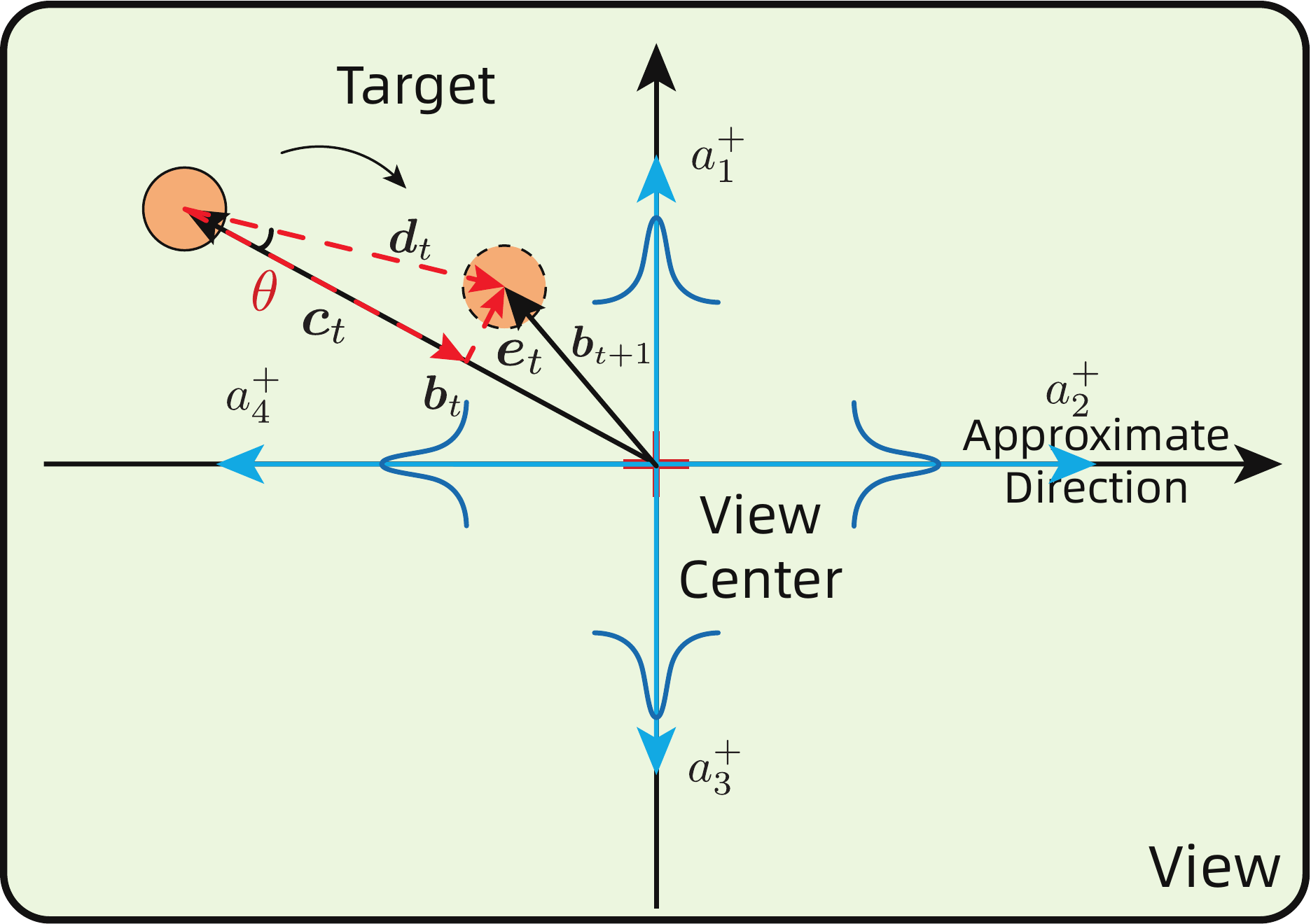}}
	\caption{Use the angle $\theta$ (red) between states of the target before and after action, the uncertainty of the basic controller can be estimated and then transform its action output into a Gaussian distribution (blue).}
	\label{scope2}
\end{figure}

After collecting the unit deviation of the whole initial exploration trajectories, we can get its uncertainty described by Gaussian parameters, mean $ \hat{\boldsymbol\mu} = \frac{1}{T}\sum_{t=1}^{T}\boldsymbol e_t/\boldsymbol a_t$ and covariance matrix $\hat{\boldsymbol{\Sigma}}=\text{diag}[\frac{1}{T-1}\sum^T_{t=1}(e_t/a_t - \mu)^2]$. Since these parameters stand for deviation of unit actions (unit actions = [1, 1, 1, 1]), the action distribution for the actual
actions is $\mathcal{G}_{bas}=(\boldsymbol\mu_{bas}, \boldsymbol\Sigma_{bas})= \mathcal{N}(a_{bas}(1+\hat{\boldsymbol{\mu}}), a_{bas}^2\hat{\boldsymbol{\Sigma}})$. Then the action outputs of basic controller are turned into distributions that easy to fuse (in IV.B).

\section{Experiment}
In this paper, we propose to train the continuum robot to aim at a target object by controlling the shift of multiple tendon drivers without the kinematic model, to track the movement of the target, and to maintain a certain axial distance. In minimally invasive surgery, the speciality of target tracking and axial distance keeping is critical for surgeons to concentrate on the practice since lesions will vibrate as the patient's breathing and other organ movements. To verify our idea, we first carry out experiments in a designed simulator and analyze ablations of it. Then we deploy it directly to a real-world continuum robot we designed. 

\subsection{Simulation}
We use \textit{Mujoco} to build a continuum robot model, with the physical manipulator to be referred. It can be divided into two motion sections, each of which is composed of 10 serial connected joints. Both of the sections are actuated by two sets of tendon-driven system at the end-point and have two-DOF separately. The panorama of the simulator is illustrated in the left of Fig. \ref{simulator}. We generate a set of space curves randomly by cubic spline (Fig. \ref{traj}). Points on the curves are all in the workspace in order to simulate the distance keeping task and ensure that the target is within reach.

\begin{figure}[ht]
	\centerline{\includegraphics[width=1\linewidth]{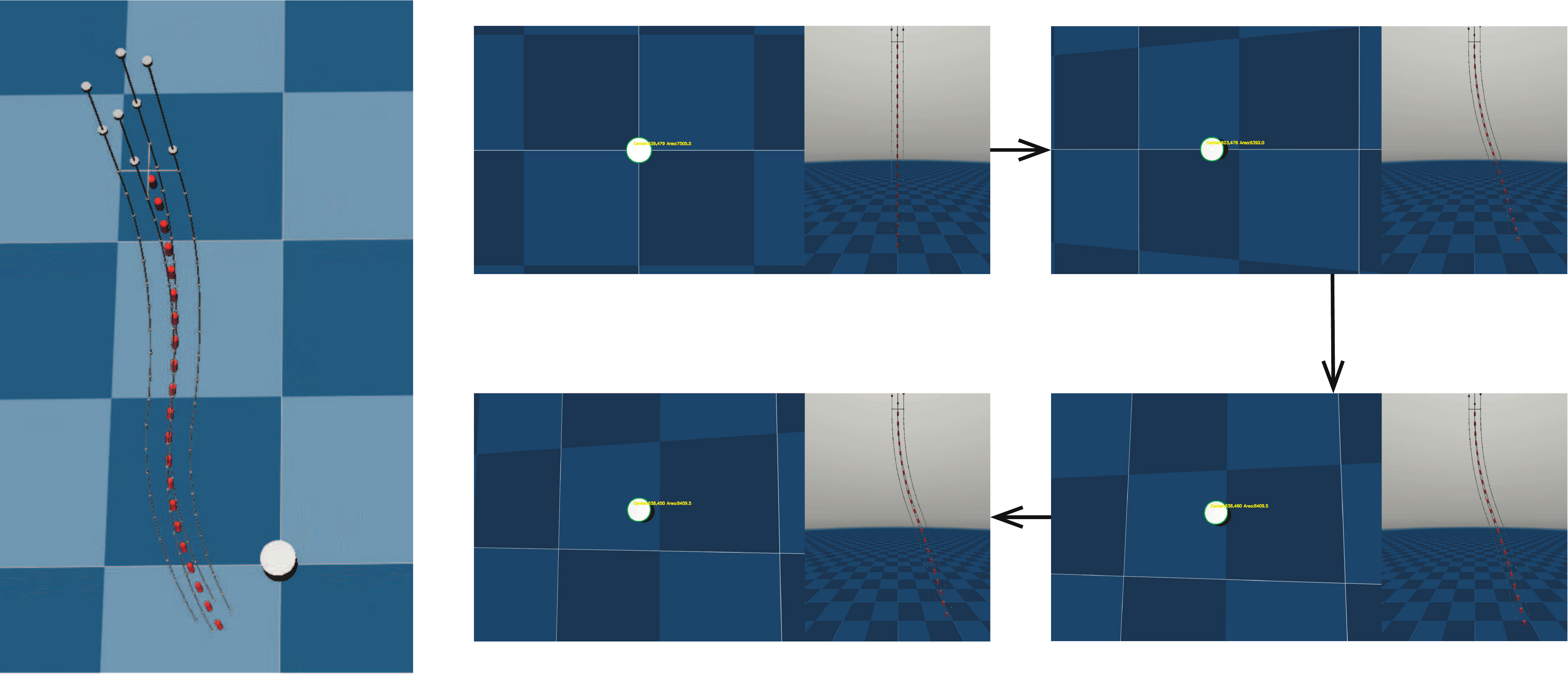}}
	\caption{Left: The panorama of tendon-driven continuum robot simulator based on Mujoco; Right: The continuum robot continuing tracking the target.}
	\label{simulator}
\end{figure}

\begin{figure}[ht]
	\centerline{\includegraphics[width=1\linewidth]{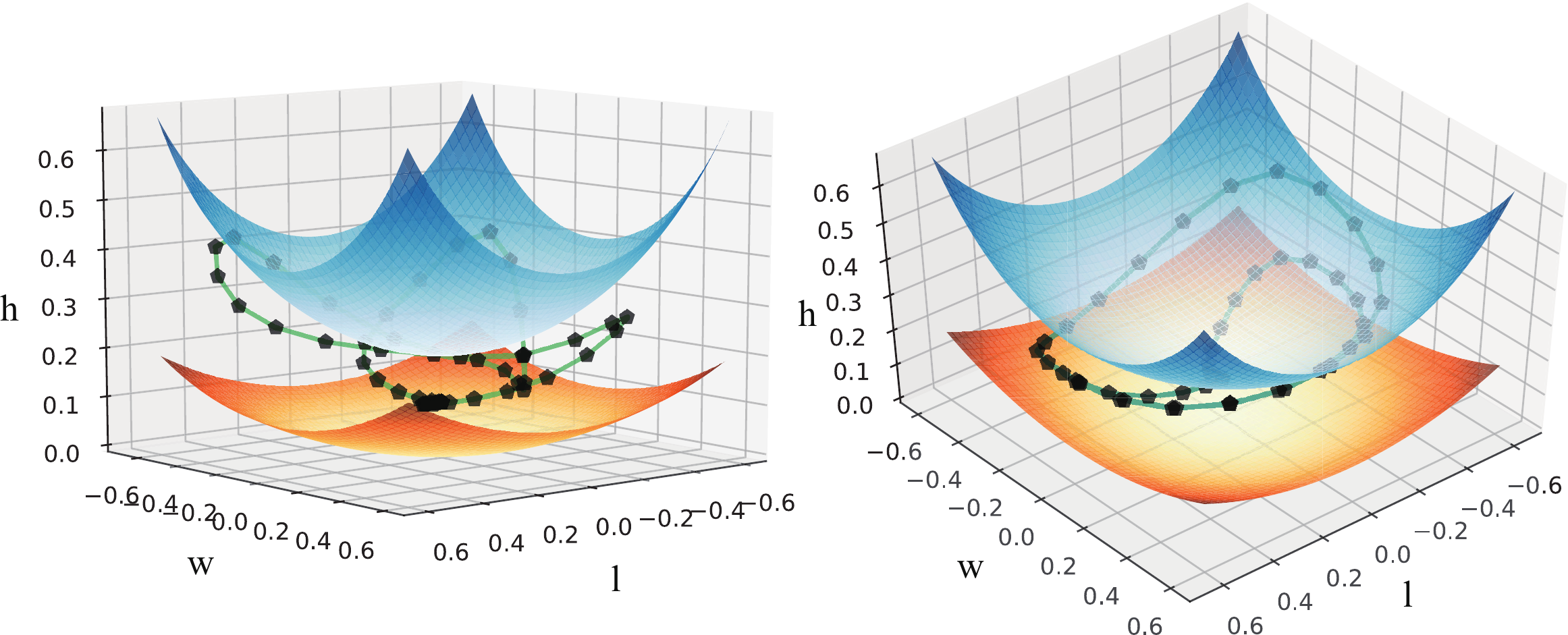}}
	\caption{Generate a set of cubic splines (green) randomly in the workspace (between two spheres), and let the target update its position follow these trajectories. The black hexagons are sequential positions of the target. Three-dimension trajectory in the workspace is propitious to estimate the distance keeping task.}
	\label{traj}
\end{figure}

Following existing studies, we use the epoch return to evaluate the performance of different algorithms. It calculates the transformation of 3-dimension Euclidean distance after each step, reward when the target reaches the visual center and punish when out of the field of view. To maintain the axial distance, the change of distance is also treated as a penalty. During the training process, each epoch has 1000 time steps with a 20 steps model rollout. Note that the performance comparison does not contain the exploration procedure. See the Appendix B for detailed parameters setting. 

Baselines we compared are as listed below:
\begin{itemize}
	\item \textbf{Basic Controller}: select actions followed the priors;
	\item \textbf{SAC}: the state-of-the-art MFRL algorithm using maximum entropy method;
	\item \textbf{PILCO}: a famous MBRL algorithm for robotics using GPs;
	\item \textbf{MBPO}: the state-of-the-art MBRL algorithm using ensemble BNN and adaptive length rollouts.
\end{itemize}

Therefore, the comparison of these five methods is shown in Fig. \ref{results}.

\begin{figure}[ht]
	\centerline{\includegraphics[width=1\linewidth]{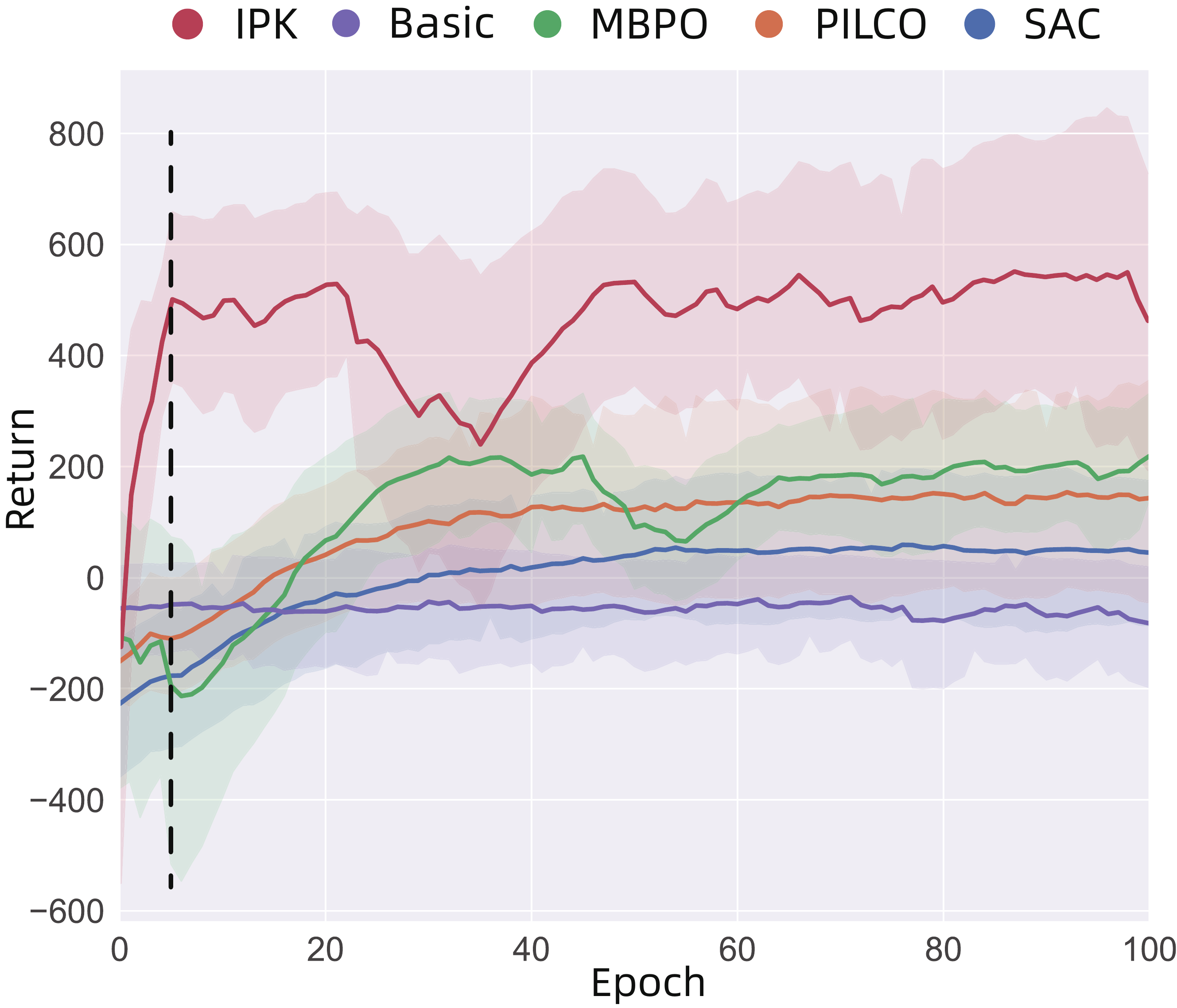}}
	\caption{The performance comparison among basic controller, SAC, PILCO, MBPO and our method. We smooth the curves by 0.9 weight parameter.}
	\label{results}
\end{figure}

From Fig. \ref{results}, we discover that by introducing inexplicit prior knowledge, the IPK improves faster than the others. At the second epoch, the IPK reaches and surpasses the basic controller more quickly than the other three SOTA algorithms. It also converges at a higher return after a few epochs. For the two MBRL algorithms, they both converge at a lower return. The primary reason is that both of them have not learned a comprehensive dynamic model because of the short task length. We will analyze this reason at Section IV.C in detail. Because of lacking the planning of the learned model, SAC's performance is worse than others as we assumed. 

Surprisingly, though acting followed rules, the performance of the basic controller is still at a low level. After checking videos saved during simulation, we find that it is very common for the continuum to oscillate near the origin under the control of the basic controller. Besides, the penalty of distance keeping also leads to this situation. Then a question occurs to us, how can these inexplicit prior knowledge assist our algorithm to gain such a high return?

From the video log, we figure out that it might benefit from the well-designed exploitation coefficient $\zeta$. Fusion controller can use the exploration of RL to amend the action value under soft constraints of rules, especially when it traps in vibration. In addition, since we merge the two motors of the same directions when calculating the KL-divergence, there is no constraint in selecting motors of the same direction. Therefore, it is easier for RL to explore a strategy to achieve the distance keeping task. The continuum can make $S$ type movements (Fig. \ref{simulator} Left) to shorten the distance. These are all the advantages of guaranteeing the exploration and we analyze this by visualizing the change of $\zeta$ below.

\subsection{Ablation Study \uppercase\expandafter{\romannumeral1}: Trade-off} 
The most critical part of the IPK framework is action fusion. We record the mean KL-divergence between IPK-MBPO (distinguished from the vanilla MBPO) and the basic controller, and the exploitation coefficient during the training process. 
\begin{figure}[h]
	\centerline{\includegraphics[width=1\linewidth]{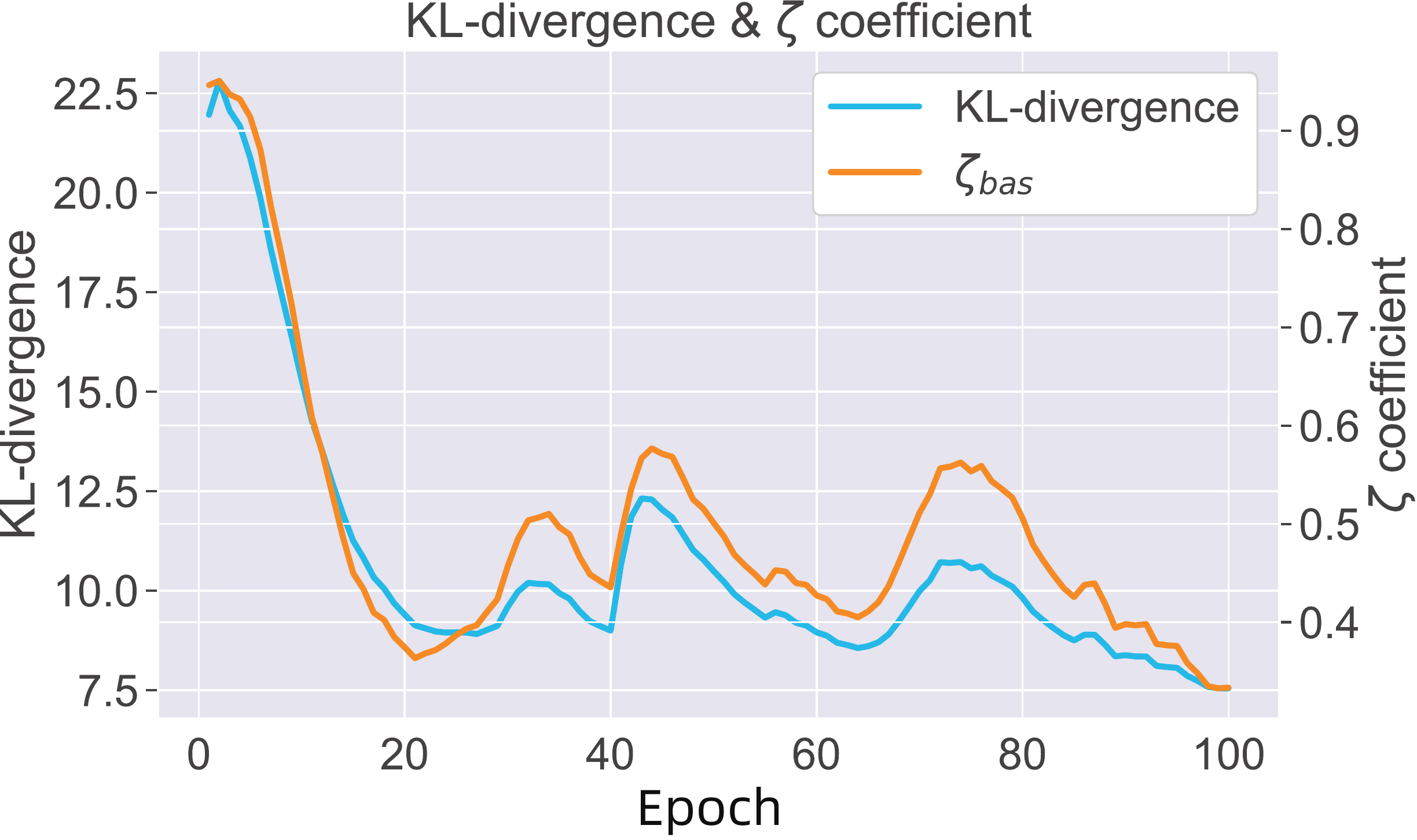}}
	\caption{The blue line represents the KL-divergence  between IPK-MBPO Gaussian policy and the basic policy. The orange line represents the value change of the exploitation coefficient $\zeta_{bas}$.}
	\label{KL_zeta}
\end{figure}

In Fig. \ref{KL_zeta}, we can discover that both KL-divergence and the exploitation coefficient descend through the training process. It demonstrates that the perfect performance of the fusion controller is not just relying on the basic controller but more focus on the data-driven IPK-MBPO  controller. Moreover, it also confirms the exploitation coefficient auto-adjustment theory in Theorem \ref{ecaa}.

\subsection{Ablation Study \uppercase\expandafter{\romannumeral2}: Task length}
In some cases, such as the target goes beyond the view scope, the environment needs to be reset. It means the current task is terminated. Short task length not only leads to low expected return but also impedes the model to learn the dynamics in the case with high curvature. Baseline algorithms are all wasting time on learning the dynamics near the plumb position again and again since the initial state of the task is vertically downward. As mentioned above, priors introducing intend to prolong the task length and to sample more successional action-state pairs. The average task length is recorded alongside the training process which is shown in Fig. \ref{task_length}.

\begin{figure}[h]
	\centerline{\includegraphics[width=1\linewidth]{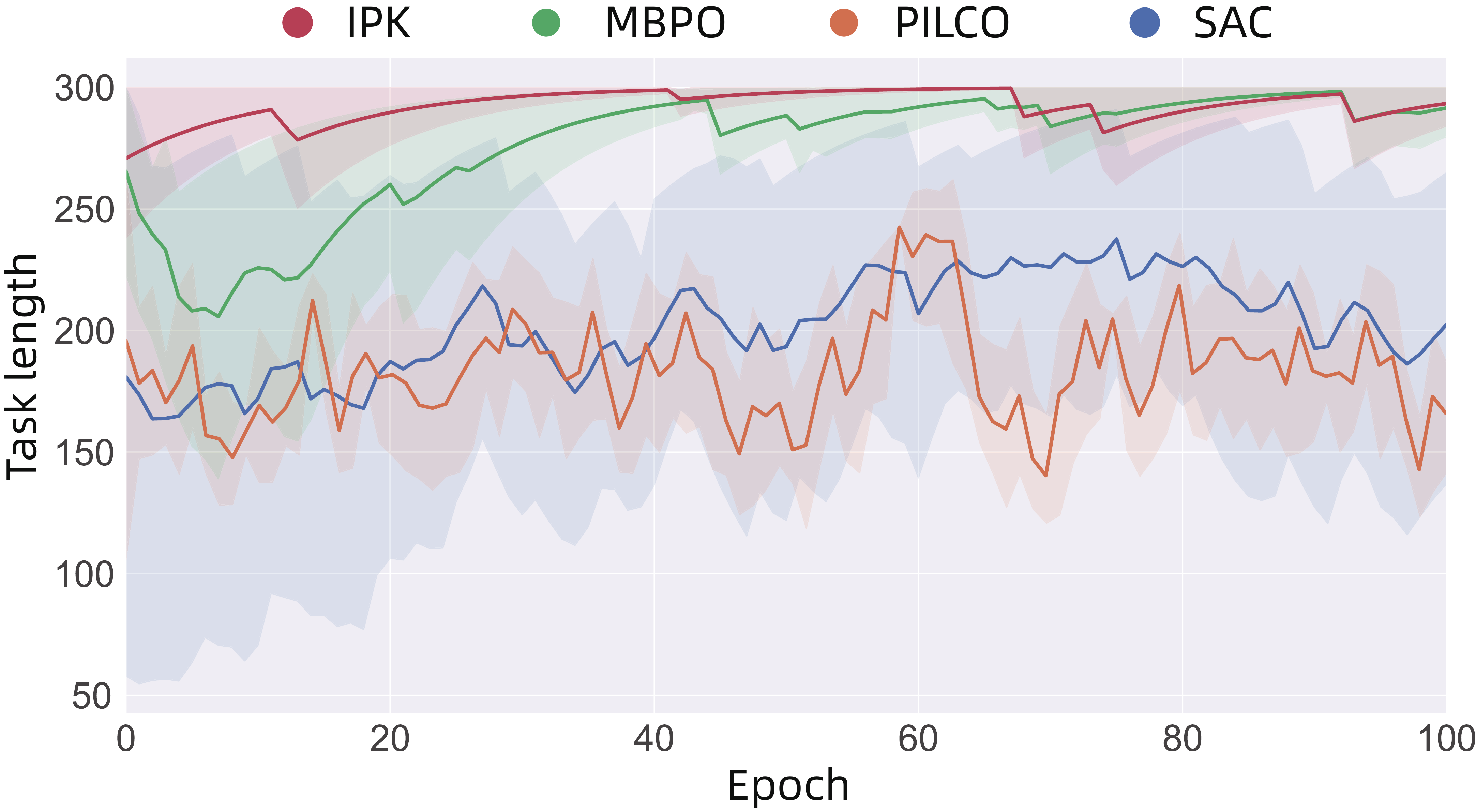}}
	\caption{Task length of each algorithm.}
	\label{task_length}
\end{figure}

The average task length of vanilla MBPO, PILCO and SAC are significantly shorter than our method. Due to the guidance of priors, the fusion controller can always reach the end of the task and seldom lose the target. It shows that exploration is safer when under the soft supervision or constraint of the priors.
 
\subsection{Experiment on real-world continuum robot}
\begin{figure*}[ht]
	\centerline{\includegraphics[width=1.0\linewidth]{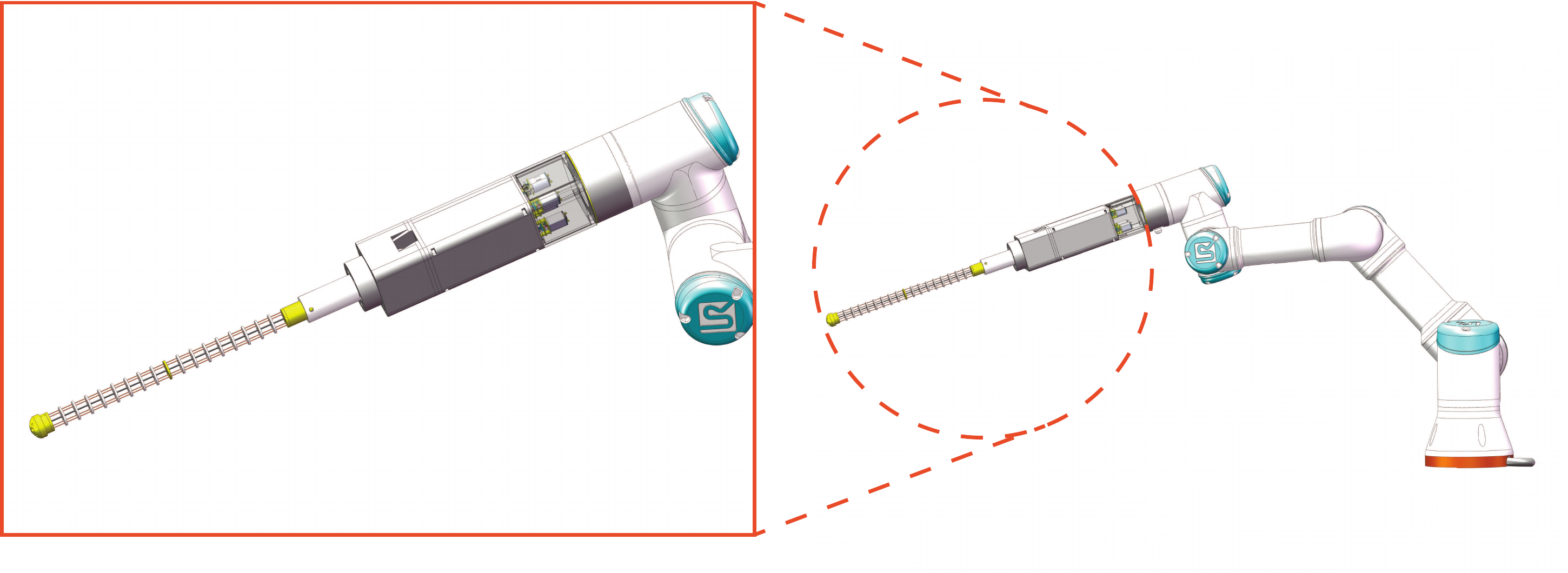}}
	\caption{The 3D model of the whole robot system. The continuum robot is connected with a UR (Universal Robots) commercial manipulator. There are four micromotors at the base of the continuum which offer 4-DOF by positive inversion.}
	\label{model}
\end{figure*}
To validate the effectiveness of the IPK algorithm, a real-world continuum robot is designed, the same as the mechanical structure described in simulation. The whole robot system is shown in Fig. \ref{model}. Plastic joints are evenly arranged and fixed on an elastic rod with large deflection, which provides necessary resilience as the backbone of the robot. Tendons are threaded through joints. Every two symmetrically arranged tendons attached to the same end-point can provide one DOF by producing strains in opposite directions. Transmission structures in such tendon-servo system sets are optimized by using screw rods with normal and reverse thread on both ends respectively. Then the two tendons linked with the same DOF can be driven by one servo motor, which avoids accuracy-loss caused by motor synchronization and structural redundancy. As a result, a one-to-one correspondence is formed between DOF and motors. The physical structure of this part is illustrated in Figure \ref{real_world_env}(a). In this way, the structure of the continuum manipulator is greatly simplified with lighter weight and higher accuracy to fit the simulation within the error range.

Same as simulation, the continuum robot needs some extra devices to perceive the experiment environment. A pinhole camera is fixed on the end-point to gather information for tracking tasks. Encoders on servo motors are used to ensure the IPK actions to be executed precisely, and protect the manipulator from being damaged in over range conditions. An extra camera is set up towards the robot, only for result evaluation. The gathered information is shown in Fig. \ref{real_world_env}(b).

\begin{figure}[ht]
	\centerline{\includegraphics[width=1.0\linewidth]{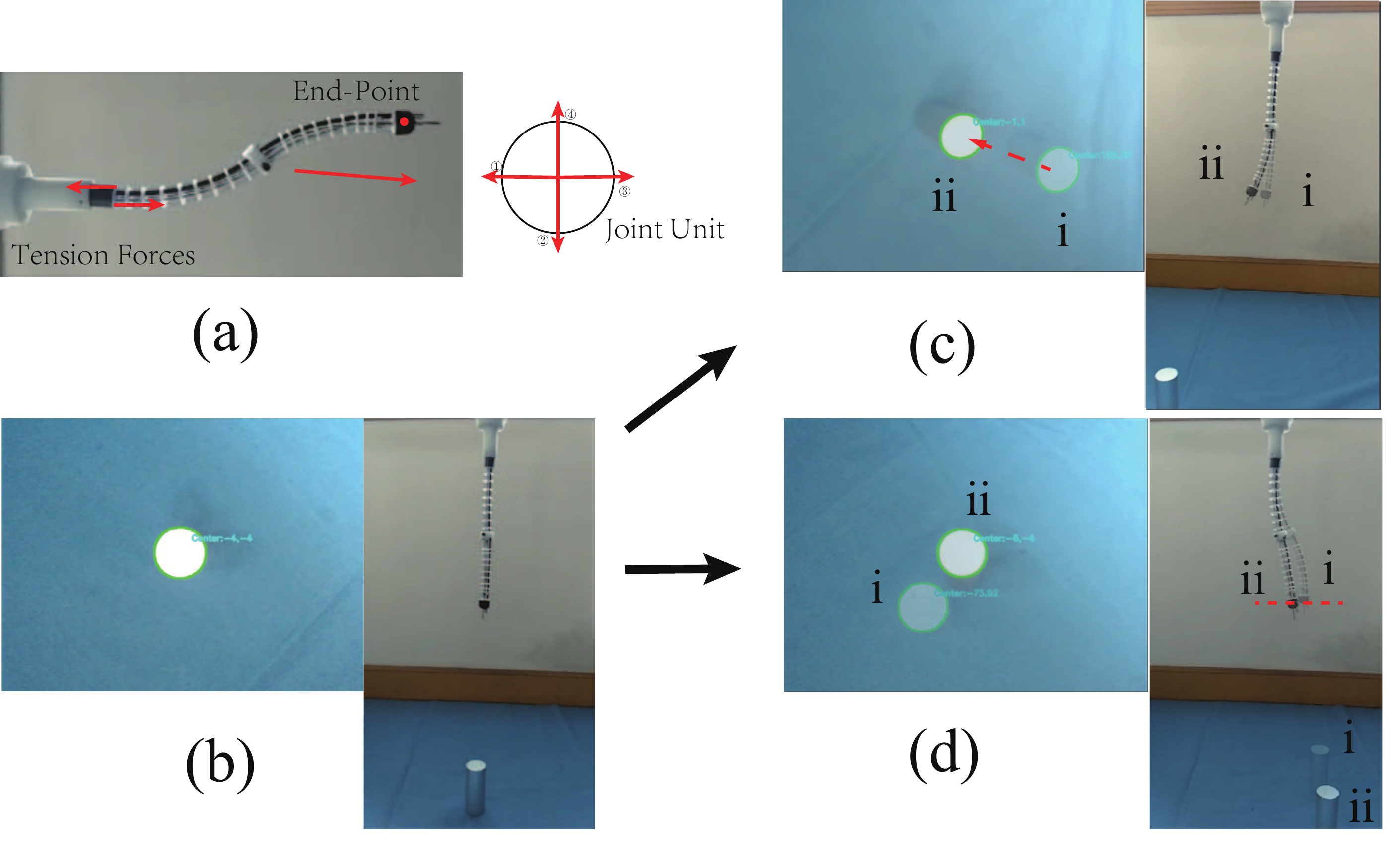}}
	\caption{Devices and final results in real-world experiments. (a) Structure of the continuum robot. (b) Training processes in every epoch started from the zero position. (c) Mode I: The continuum manipulator was trained to tracking the target by using the visual observation, moving from state i to state ii for example. (d) Mode II: Height information of the end-point was gathered by image processing. With height information added into rewards, the continuum manipulator was trained to keep the distance during tracking tasks. Notice that state ii has a similar height with state i. Details as video.}
	\label{real_world_env}
\end{figure}
The real-world experiment process was similar to that in simulation. Different from sim-to-real studies, the real-world model is not transformed from the simulation but directly learn from the real environment. In this case, the model can learn the uncertainty in the real environment and take these errors into account. The experiment was carried out mainly in two steps. Firstly, the continuum manipulator was trained in tracking objects by using visual observation (Fig. \ref{real_world_env}(c)). In this case, in order to shorten the training process, the real object was replaced by a screen that kept playing the video of simulated objects in a loop. Once the tracking task was failed or mechanical limits were reached, the manipulator would come back to the zero position with the help of encoders and prepare for the next training epoch. With the prior-experience of basic actions, after only half an hour, 10,000 steps, the robot gained an acceptable performance.
Secondly, based on the already learned model in tracking tasks, height information of the end-point was added into rewards and made the manipulator learned to keep axial distance with the object (Fig. \ref{real_world_env}(d)). Then the robot would try to track the object with the least distance loss. After one half and an hour, 20,000 steps, the robot achieved convergence. 
Finally, the network weights were saved to reproduce the two tasks. The video of simulation and real-world experiments is available at \textit{\url{https://youtu.be/MhqBSI-SXQc}}.

\section{Conclusion}
In this paper, we propose a new approach to integrating human prior knowledge into reinforcement learning robotics control. It takes full advantage of inexplicit prior knowledge and accelerates the learning process by guiding the policy search towards the approximate correct directions. Furthermore, the exploration of MBRL is also ensured by some learned coefficients. An empirical result which is given by visualizing the KL-divergence between action distributions proves our theory. By achieving the experiment we conducted, the designed continuum robot can assist the minimally invasive surgery. 

Despite the delicate framework designing, the success is still merely proved in simple action space. We will focus on how to learn a combinatorial control of UR and continuum assembly efficiently and achieve the whole minimally invasive surgical procedure in our future work.

\section*{Acknowledgment}
	This work was supported by the National Natural Science Foundation of China
	(Grant No. 61973210), Shanghai Science and Technology Commission (Grant
	No. 17441901000), the Medical-engineering Cross Projects of SJTU (Grant
	Nos. YG2019ZDA17, ZH2018QNB23), and the Scientific Research Project
	of Huangpu District of Shanghai (Grant No. HKQ201810).
	\bibliographystyle{ieeetr}
	\bibliography{ref_IPK1}
	
\newpage
\onecolumn
\appendix
\subsection{Proofs}
\subsubsection{Theorem 3.1}
	Let $\mathcal{G}_{gau}$ be the Gaussian distribution from the $T-1$ Gaussian policy and let $\mathcal{G}_{bas}$ be the basic action distribution with uncertainty. Then the exploitation coefficient is related to the KL-divergence between these two distributions.
\begin{equation}
\begin{aligned}
\zeta_{bas}^{T^*}&=\arg\min_{\zeta^{T}\geq  0} \mathbb{E}_{s_{T-1},a_{T-1}\sim\rho_\pi^*}
\{-\zeta^{T}_{bas}\mathcal{D}_{KL}[\mathcal{G}_{bas}(\pi^{T-1})||\mathcal{G}_{gau}(\pi^{T-1})]-\zeta^{T}_{bas}\mathcal{D}_0\}\\
\zeta_{real}^{T^*}&=1-\zeta_{bas}^{T^*}
\end{aligned}
\end{equation}
where $\mathcal{D}_0$ is a hyperparameter for KL-divergence limiting.\\
\\
\begin{proof}
    We aim to find a maximum entropy policy with a maximal expected return that satisfies a minimum distance between RL policy and inexplicit prior knowledge. This can be formalized as a constrained optimization problem.
	\begin{equation}
		\begin{aligned}
		&\max_{\pi_{0:T}}\mathbb{E}_{\rho_\pi}[\sum_{t=0}^Tr_{\mathcal{H}}(s_t,a_t)]\\
		&s.t.\quad \mathbb{E}_{(s_t,a_t)\sim\rho_\pi}[-\mathcal{D}_{KL}(\pi_{bas}||\pi_{gau})]\leq\mathcal{D}_0
		\end{aligned}
	\end{equation}
	where $r_{\mathcal{H}}$ is the reward with maximum entropy which is defined in SAC as $r_{\mathcal{H}}\left(\mathbf{s}_{t}, \mathbf{a}_{t}\right) \triangleq r\left(\mathbf{s}_{t}, \mathbf{a}_{t}\right)+\mathbb{E}_{\mathbf{s}_{t+1} \sim p}\left[\mathcal{H}\left(\pi\left(\cdot | \mathbf{s}_{t+1}\right)\right)\right]$.
	
 Use Lagrange Multiplier Method to transform the constrained optimization problem into the unconstrained optimization problem.
	\begin{equation}
		\begin{aligned}
		\max_{\pi_T}&\mathbb{E}_{(s_t,a_t)\sim\rho_{\pi}}[r(s_t,a_t|\zeta_T)]=\\&\min_{\zeta\geq0}\min_{\alpha_T\geq0}\max_{\pi_T}\mathbb{E}[r(s_t,a_t|\zeta_T)-\alpha_T\log\pi_{gau}(a_t|s_t)-\zeta_{T}\mathcal{D}_{KL}(\pi_{bas}||\pi_{gau})]-\alpha_T\bar{\mathcal{H}}-\zeta_{T}\mathcal{D}_0
		\end{aligned}
	\end{equation}
	
Therefore,  the coefficient $\zeta$ $(\zeta = \zeta_{bas})$ can be optimized by proceeding recursion after obtaining the optimal policy and Q function. Note that the $T$ step optimal $\zeta$ is related to the $T-1$ step optimal policy.
	\begin{equation}
		\zeta^*_T=\arg\min_{\zeta_T}\mathbb{E}_{a_t\sim\pi^*_{T-1}}[-\zeta_T\mathcal{D}_{KL}(\pi_{bas}^{T-1}||\pi_{gau}^{T-1})-\zeta_T\mathcal{D}_0]
	\end{equation}
\end{proof}

\subsubsection{Theorem 3.2}
	According to Theorem \ref{ecaa} and Equation \ref{weighted_kalman}, let $\pi_{T}\triangleq \pi(a_t|s_t,\zeta_{bas}^{T-1})$, the new policy of $T+1$ time step is $\pi_{T+1}\triangleq \pi(a_{T+1}|s_{T+1},\zeta_{bas}^{T})$. Then $Q^{\pi_{T+1}}\left(\mathbf{s}_{t}, \mathbf{a}_{t}\right) \geq Q^{\pi_{T}}\left(\mathbf{s}_{t}, \mathbf{a}_{t}\right)$ for all $\left(\mathbf{s}_{t}, \mathbf{a}_{t}\right) \in \mathcal{S} \times \mathcal{A}$ with $|\mathcal{A}|<\infty$.\\
	\\
\begin{proof}
 Similar to the soft Bellman equation, we expand the Q value function to show the relationship with the exploitation coefficient $\zeta$. Here $\zeta$ is short for $\zeta_{bas}$.
	\begin{equation}
		\begin{aligned}
		Q^{\pi_T}&=r(\zeta^{T})+\gamma \mathbb{E}_{s_t\sim\rho_{\pi_T}}[V^{\pi_T}(s_{t+1},a_{t+1})]\leq r(\zeta^{T+1})+\gamma \mathbb{E}_{s_t\sim\rho_{\pi_T}}[ \mathbb{E}_{a_t\sim\rho_{\pi_{T+1}}}[Q^{\pi_T}(s_t,a_t)-\mathcal{H}^T-\mathcal{D}_{KL}(\zeta^{T+1})]]\\
		&\leq Q^{\pi_{T+1}}
		\end{aligned}
	\end{equation}
	
	Obviously, along with the improvement of the fusion policy, the KL-divergence between original policy and fusion policy is getting lower, leading to a smaller $\zeta$. Therefore, a higher Q value is guaranteed.
\end{proof}
\newpage
\subsection{Hyperparameter}
\begin{table}[htbp]
	\begin{center}
		\caption{Hyperparameter}
	\begin{tabular}{ccc}
		\toprule[1pt] Catalog &Parameter & Value\\
		\midrule[1pt] \multirow{6}*{Network} &optimizer & Adam\\
		~&learning rate & $3 \times 10^{-4}$ \\
		~&number of hidden layers (all networks) & 2 \\
		~&number of hidden units per layer & 256 \\
		~&number of samples per minihatch & 256 \\
		~&nonlinearity & Sigmoid \\
		\midrule[1pt] \multirow{9}*{Training}&discount $(\gamma)$ & 0.99 \\
		~&target entropy  & -2 \\
		~&target divergence  & -1.5 \\
		~&task length & 300\\
		~&epoch length& 1000\\
		~&rollout length & 20 \\
		~&frequency of model training & 250\\
		~&number of BNN models & 7\\
		~&replay buffer size & $10^{6}$ \\
		~&initial exploration steps & 600\\
		\bottomrule[1pt]
	\end{tabular}
\end{center}
	\label{hyper}
\end{table}

\subsection{Pseudocode}
	\begin{algorithm}[h]
	\caption{Efficient Model-based Reinforcement Learning based on Inexplicit Prior Knowledge}
	\label{Pseudocode}
	\begin{algorithmic}[1]
		\State Initialize environment data pool $\mathcal{D}_{env}$
		\State Initialize model data pool $\mathcal{D}_{model}$
		\State Initialize fake env by ensemble BNN models
		\State Initialize exploitation coefficient $\zeta$
		
		\While{poolsize $<$ init exploration steps} \algorithmiccomment{Initial exploration procedure}
		
		\parState{%
			Select actions $ a_{bas}\sim \pi_{bas}(\cdot|o_{real}) $}
		\parState{%
			$ o_{real}',  r_{real} = \textbf{real env}(o_{real},a_{bas})$}
		\parState{%
			Estimate $ o_{mbpo}',  r_{mbpo}$}
		\parState {Store ($o_{real}, a_{uni}, r_{mbpo}, o_{mbpo}', a_{bas} ,r_{real},o_{real}'$) in $\mathcal{D}_{env}$}
		
		\EndWhile
		
		\parState{Estimate the accuracy of $\pi_{bas}$ by $\mathcal{D}_{env}$ and turn into means $\boldsymbol{\mu}$ and variances $\boldsymbol{\sigma^2}$\algorithmiccomment{Uncertainty estimation}}

		\For{epochs $ =1,  M $} \algorithmiccomment{Learning \& Fusion  procedure}
		\State Reset the environment, and get initial $ o $

		\For{timesteps $=1,  N $} \algorithmiccomment{Training BNN models}
		
		\If{timesteps $\%$ model train freq $==0$ }
		\State Train BNN models on $\mathcal{D}_{env}$ with real data
		\State Perform k-step model rollout; add to $\mathcal{D}_{model}$
		\EndIf
		
		\parState {%
			$ a_{bas}\sim \pi_{bas}(\cdot|o) $; $ \mathcal{G}_{bas}= \mathcal{N}(a_{bas}(1+\boldsymbol{\mu}), a_{bas}^2\boldsymbol{\Sigma}) $\\
			Update coefficient $\zeta\varpropto KL(\mathcal{G}_{bas}||\mathcal{G}_{gau})$\\
			$ \mathcal{G}_{fus}=Kalman(\mathcal{G}_{gau}, \mathcal{G}_{bas}| \zeta)$; \\Sample action $a_{fus}, a_{gau}$ from $\mathcal{G}_{fus}, \mathcal{G}_{gau}$
		}
		\parState{%
		$ o_{real}',  r_{real} = \textbf{real env}(o_{real},a_{fus})$}
		\parState{%
			Estimate $ o_{mbpo}',  r_{mbpo}$}
	
		\parState{Store ($o_{real}, a_{gau}, r_{mbpo}, o_{mbpo}', a_{fus} ,r_{real},o_{real}'$) in $\mathcal{D}_{env}$}
	
		\If{ready to train} \algorithmiccomment{Training SAC}
		\parState{%
			Train soft Actor-Critic on $\mathcal{D}_{model}$ with concatenate data of real and mbpo}
		\EndIf
		
		\EndFor
		
		\EndFor


	\end{algorithmic}
\end{algorithm}

\end{document}